\documentclass{article} % For LaTeX2e
\usepackage{iclr2027_conference,times}

\usepackage{amsmath,amsfonts,bm}

\def\eqref#1{equation~\ref{#1}}
\def\1{\bm{1}}

\DeclareMathAlphabet{\mathsfit}{\encodingdefault}{\sfdefault}{m}{sl}
\SetMathAlphabet{\mathsfit}{bold}{\encodingdefault}{\sfdefault}{bx}{n}

\usepackage{hyperref}
\usepackage{url}
\usepackage{microtype}
\usepackage{subcaption}
\usepackage{booktabs} % for professional tables
\usepackage{multirow}
\usepackage{colortbl}
\usepackage{xcolor}
\usepackage{multicol}
\usepackage{mdframed}
\usepackage{enumitem}
\usepackage{placeins}
\usepackage{float}
\usepackage[most]{tcolorbox}
\usepackage{tabularx}
\usepackage{booktabs}
\usepackage{array}
\usepackage{xcolor}
\usepackage{fvextra}
\usepackage{hyperref}

\usepackage{booktabs}
\usepackage{tabularx}
\usepackage{array}
\usepackage{makecell}
\usepackage[table]{xcolor}

\newcolumntype{Y}{>{\raggedright\arraybackslash}X}
\usepackage{algorithm}
\usepackage{algorithmic}
\usepackage{amsmath}
\usepackage{amssymb}
\usepackage{mathtools}
\usepackage{amsthm}
\usepackage{comment}
\newtheorem{theorem}{Theorem}

\usepackage{array}
\usepackage{booktabs}
\usepackage{tabularx}
\usepackage[table]{xcolor}

\usepackage{pifont}

\title{Is Human-Readable Text Necessary for Effective LLM Fine-Tuning?}

\author{
Jinhao Zhang$^{1}$ \quad Zeyu Liu$^{3}$ \quad Zicheng Yan$^{2}$ \quad
Yunquan Zhang$^{3}$ \quad Daning Cheng$^{3,\dagger}$ \quad Song Tang$^{4}$\\[0.6em]
{\normalfont\small $^{1}$Beijing University of Posts and Telecommunications}\\
{\normalfont\small $^{2}$University of Science and Technology of China}\\
{\normalfont\small $^{3}$Institute of Computing Technology, Chinese Academy of Sciences}\\
{\normalfont\small $^{4}$University of Shanghai for Science and Technology}\\[0.3em]
{\normalfont\small $^{\dagger}$Corresponding author}
}

\iclrfinalcopy % Reveal authors and omit review line numbers for the arXiv version.
\begin{document}

\maketitle
\lhead{Preprint}

\begin{abstract}
Is human readability necessary for effective fine-tuning of large language models? We investigate whether model-conditioned training representations can preserve or improve adaptation utility without requiring a human-readable textual form. We propose Desired-Update-Aligned Synthetic Data (DASA), which uses activation-gradient feedback from a frozen reference model to guide the optimization of continuous synthetic input embeddings. Inspired by the role of activation gradients in local risk reduction, DASA targets useful adaptation updates rather than source-text reconstruction or linguistic fluency. The resulting embeddings are used directly for downstream fine-tuning; discrete token projections are employed only for qualitative inspection. Experiments on six models from the Llama and Qwen families, ranging from 1B to 32B parameters, cover six benchmarks spanning knowledge, mathematical reasoning, code generation, and commonsense reasoning. Under matched LoRA adaptation settings, DASA achieves performance comparable to the source natural-language data and surpasses it in multiple configurations, while outperforming GRADMM in most comparisons. Further experiments cover general-domain and task-specialized source data. Under the evaluated synthesis settings, DASA provides a $3.6$--$4.9\times$ speedup over GRADMM with comparable peak GPU memory.  
\end{abstract}

\section{Introduction}
Human readability and training utility represent two distinct properties of fine-tuning data for large language models (LLMs). Readability concerns whether the data are coherent and interpretable to humans, whereas training utility concerns whether they provide effective learning signals that improve model performance on held-out tasks. Although natural-language data can exhibit both properties simultaneously, their co-occurrence does not imply that one is necessary for the other. In particular, it remains unclear whether human-readable expressions are also required for models to benefit from fine-tuning. This raises a more fundamental question about the objective of data design: rather than treating human-readable language as a default representation, can we construct training data directly around the target model's learning requirements? This work studies the following question: Is human readability necessary for effective fine-tuning, or can training utility be preserved when linguistic form is substantially relaxed?

We focus on post-training adaptation of existing language models rather than pretraining from scratch. Concretely, given a reference model ($f_{\theta_0}$) and a natural-language source dataset (D), we aim to construct an alternative dataset (D') that no longer relies on the conventional human-readable form of the original text, while still providing comparable adaptation gains under the same fine-tuning configuration and training budget. We further ask whether alternative representations constructed specifically for the receiving model can, in some settings, outperform the original natural-language data. Crucially, we define training utility by downstream performance after fine-tuning on independent evaluation tasks. We do not require the model to interpret, reconstruct, or translate the alternative sequences back into natural language. In this sense, the object of interest is not whether the representation remains semantically transparent to humans, but whether it serves as an effective substrate for model adaptation.

Prior work has shown that, under appropriate conditions, alignment between activation gradients and newly introduced residual directions can provide a basis for constructing local updates that reduce risk. Inspired by this perspective, we bring the idea of exploiting model-internal improvement signals into the construction of training data. Rather than requiring alternative data to preserve the human-readable form of the original text, we focus on whether it retains the effective signals needed for model adaptation. To this end, we use internal feedback produced by a reference model on the source data to guide the optimization of alternative training sequences. Our goal is to investigate whether natural-language data can be transformed into representations that are difficult for humans to read directly, while still retaining high utility for fine-tuning. This design allows us to test whether the adaptation value of data for an existing model can be preserved, or even improved, while substantially changing its surface form.

In this work, we propose  Desired-Update-Aligned Synthetic Data (DASA), a method for constructing alternative training data guided by the alignment of model-internal signals. Given source data, DASA treats the internal signals induced in a reference model as optimization targets and updates alternative sequences through gradient-based optimization so that the signals they elicit align with the desired construction objective. Importantly, the procedure does not explicitly optimize for reconstructing the source text or producing fluent natural language. This allows the resulting sequences to depart substantially from the original word order, syntax, and surface form, yielding training representations that are difficult for humans to read directly. The central idea is not to treat unreadability itself as a mechanism for improving performance, but to relax constraints on linguistic form and shift the optimization objective toward internal signals useful to the model. At the same time, we do not assume that similarity in internal signals implies equivalence of the resulting training dynamics. Instead, we evaluate the validity of this construction criterion through the actual downstream performance obtained after fine-tuning on the alternative data.

We evaluate DASA on the Llama and Qwen model families, covering six models from 1B to 32B parameters across a diverse range of downstream tasks. Under a unified fine-tuning setup, DASA-generated alternative data yields adaptation gains comparable to those obtained with the original natural-language data and surpasses it on multiple model–task combinations. Compared with the gradient-matching baseline GRADMM, DASA performs better in the majority of evaluated settings while requiring less time for synthetic data generation. Additional experiments with both general-domain corpora and task-relevant data show that these advantages persist across different data sources, supporting the effectiveness and practicality of constructing training representations from model-internal signals.

Our main contributions are: 
\ding{182} We propose DASA, which constructs fine-tuning data by aligning model-internal signals without requiring text reconstruction or language fluency. 
\ding{183} We show that DASA preserves fine-tuning utility across model scales, tasks, and data budgets, and can outperform the original text in some settings. 
\ding{184} We evaluate DASA across different data sources, repeated runs, and generation costs to assess its effectiveness and practicality.

\section{Related Work}
\label{sec:related_work}
Dataset distillation aims to preserve training utility with a small set of synthetic examples, and methods developed for images have since been extended to text~\citep{wang2018dataset,zhao2021gradient,zhao2021dsa,zhao2023distribution,cazenavette2022trajectory,DBLP:conf/icml/NguyenLBMRM25}. Text methods include continuous embedding optimization, discrete text generation, and training trajectory matching~\citep{sucholutsky2021soft,maekawa2024dilm,yao2025transferable}. GRADMM matches parameter gradients and, in its main procedure, projects optimized embeddings into low-perplexity text; it also reports an ablation that replaces top-$k$ filtering with $L_2$ vocabulary projection~\citep{DBLP:conf/icml/NguyenLBMRM25}. 

Prior work has also explored fine-tuning with difficult-to-read instructions while retaining the original natural-language responses~\citep{duan2025unnatural}, learnable prompts~\citep{shin2020autoprompt,li2021prefix,lester2021prompt,liu2022ptuning}, model-internal information for data synthesis~\citep{yin2020deepinversion,wang2022cafe}, and theoretical analyses of local update directions~\citep{cheng2026scaling}. Adapters and LoRA provide mechanisms for efficient model adaptation~\citep{houlsby2019adapters,hu2022lora}. DASA constructs continuous synthetic inputs for subsequent fine-tuning by optimizing the first-order effect of their induced update direction on reference risk, without requiring source-text reconstruction or linguistic fluency. The residual expansion analysis motivates this objective but does not guarantee the outcome of finite-step fine-tuning.

\section{methodology}
\subsection{Start Point}

The notation, basic setup, and standing assumptions used in this section
are given in Appendix~\ref{app:setup_notation}; additional conditions are
stated where needed. Appendix~\ref{app:prior_work} provides the relevant
definitions and derivations from this prior work.

At a fixed layer of the reference model, the activation gradient
$\zeta$ measures the first-order sensitivity of the loss to a change
in the hidden representation $z$. In their analysis of residual-block
expansion, \citet{cheng2026scaling} consider a realizable parameter
direction $v$ whose first-order effect on population risk is
\begin{equation}
D_Q(v)
=
\left.
\frac{\mathrm d}{\mathrm d\eta}
\mathbb E_Q\!\left[
\ell\bigl(f_{\mathrm{top}}(z+h_{\eta v}(z)),y\bigr)
\right]
\right|_{\eta=0}
=
\mathbb E_Q\!\left[\zeta^\top u_v(z)\right].
\end{equation}
Their Condition~1 requires the existence of such a direction with
$D_Q(v)<0$. Under the stated differentiability and realizability
conditions, their Theorem~4 then establishes that, for some sufficiently
small $\eta>0$,
\begin{equation}
\mathbb E_Q\!\left[
\ell\bigl(f_{\mathrm{top}}(z+h_{\eta v}(z)),y\bigr)
\right]
<
\mathbb E_Q\!\left[
\ell\bigl(f_{\mathrm{top}}(z),y\bigr)
\right].
\end{equation}
This is an existence result for the expanded model class; it does not
assert that a particular dataset or optimizer will find the improving
direction. Their Theorem~5 additionally bounds the probability of
nonpositive alignment between independent training and test
activation-gradient means under a nonzero population mean and
covariance control.

These results motivate a reverse data-construction question: given a
target model and reference data, can we construct synthetic inputs whose
induced update direction is locally improving on the reference set?
DASA turns this directional criterion into a data-construction objective
by optimizing continuous synthetic inputs $Z^s$ such that the
parameter-space update $v(Z^s)$ they induce produces a favorable
first-order change in the reference loss.

For the reference set
$\mathcal D_{\mathrm{ref}}
=\{(x_i,y_i)\}_{i=1}^{N_{\mathrm{ref}}}$, we compute and cache
\[
z_i=f_{\mathrm{bot}}(x_i),
\qquad
\zeta_i=\nabla_{z_i}\ell(f_{\mathrm{top}}(z_i),y_i).
\]
Replacing the population expectation in $D_Q(v)$ with the empirical
average gives the first-order change in reference loss along $v$:
\begin{equation}
D_{\mathcal D_{\mathrm{ref}}}(v)
=
\frac{1}{N_{\mathrm{ref}}}
\sum_{i=1}^{N_{\mathrm{ref}}}
\zeta_i^\top u_v(z_i).
\end{equation}

At each synthesis step, the current synthetic examples and their retained
prediction targets induce a parameter-space direction $v(Z^s)$.
We then evaluate the first-order effect of this direction on the fixed
reference examples and optimize
\begin{equation}
\boxed{
\min_{Z^s}
D_{\mathcal D_{\mathrm{ref}}}\bigl(v(Z^s)\bigr)
=
\min_{Z^s}
\frac{1}{N_{\mathrm{ref}}}
\sum_{i=1}^{N_{\mathrm{ref}}}
\zeta_i^\top u_{v(Z^s)}(z_i).
}
\end{equation}
Gradients of this objective are propagated through $v(Z^s)$ to update
$Z^s$. Thus, DASA directly optimizes the directional derivative of the
reference loss induced by the candidate synthetic inputs, rather than a
distance between activation-gradient means. The relation between a LoRA
parameter direction and $u_v(z)$ is detailed in
Appendix~\ref{app:covariance_control}. Under the stated conditions, a
negative objective value indicates that the induced update direction is
locally improving on the reference set; downstream performance after
finite-step adaptation is evaluated empirically.

\subsection{Algorithm Design}
\begin{algorithm}[t]
\caption{DASA: Desired-Update-Aligned Synthetic Data}
\label{alg:dasa}
\begin{algorithmic}[1]
\REQUIRE Frozen model $f_{\theta_0}$,
reference dataset
$\mathcal D_{\mathrm{ref}}
=\{(x_i,y_i)\}_{i=1}^{N_{\mathrm{ref}}}$,
target layer $l^*$,
synthetic budget $n\times L$,
steps $T_{\mathrm{syn}}$,
learning rate $\eta_{\mathrm{syn}}$
\ENSURE Continuous synthetic inputs $S_{\mathrm{syn}}$
and shifted-token targets $Y^s$

\FOR{$i=1,\ldots,N_{\mathrm{ref}}$}
    \STATE Compute
    $z_i\leftarrow f_{\mathrm{bot}}(x_i)$
    and
    $\zeta_i\leftarrow
    \nabla_{z_i}\ell(f_{\mathrm{top}}(z_i),y_i)$.
\ENDFOR

\STATE Define
$D_{\mathcal D_{\mathrm{ref}}}(v)
\coloneqq
\frac{1}{N_{\mathrm{ref}}}
\sum_{i=1}^{N_{\mathrm{ref}}}
\zeta_i^\top u_v(z_i)$.

\STATE Select $n$ length-$L$ reference token sequences $X^{(0)}$.
\STATE Initialize $Z^s$ from the input embeddings of $X^{(0)}$
and set $Y^s\leftarrow\operatorname{Shift}(X^{(0)})$.

\FOR{$t=1,\ldots,T_{\mathrm{syn}}$}
    \STATE Compute the tangent direction
    $v\leftarrow v(Z^s)$
    clip its $\ell_2$ norm to at most $2.0$.
    \STATE Evaluate $D_{\mathcal D_{\mathrm{ref}}}(v)$.
    \STATE Update
    $Z^s\leftarrow Z^s-\eta_{\mathrm{syn}}
    \nabla_{Z^s}
    D_{\mathcal D_{\mathrm{ref}}}\bigl(v(Z^s)\bigr)$.
\ENDFOR

\STATE $S_{\mathrm{syn}}\leftarrow\mathrm{BF16}(Z^s)$.
\RETURN $(S_{\mathrm{syn}},Y^s)$
\end{algorithmic}
\end{algorithm}

DASA takes a frozen pretrained model $f_{\theta_0}$, a finite reference set
$\mathcal D_{\mathrm{ref}}=\{(x_i,y_i)\}_{i=1}^{N_{\mathrm{ref}}}$,
a target layer $l^*$, and a synthetic budget of $n\times L$.
Its output is a continuous input tensor for subsequent fine-tuning, rather
than a set of natural-language sentences. We denote the tensor being optimized
by $Z^s\in\mathbb R^{n\times L\times d_{\mathrm{emb}}}$, representing $n$
synthetic sequences of $L$ continuous input embeddings each.
Let $X^{(0)}=\{x_j^{(0)}\}_{j=1}^{n}$ denote the length-$L$
reference token sequences used for initialization. We initialize
$Z_0^s$ from their input embeddings and set
$Y^s=\operatorname{Shift}(X^{(0)})$, where
$y^s_{j,t}=x^{(0)}_{j,t+1}$ for $t=1,\ldots,L-1$.
These are the standard shifted-token targets of causal language
modeling, not separately annotated labels. For clarity, Algorithm 1 presents the single-layer case; in experiments, we apply the same procedure to all Transformer layers, clipping the induced direction separately at each layer.
\paragraph{Synthetic data generation.}
Algorithm~\ref{alg:dasa} summarizes the synthesis procedure. We first pass
the reference data through the frozen model and compute their activation
gradients at the target layer. The synthetic activation gradients determine
the induced direction $v(Z^s)$. To prevent the objective from being reduced
merely by increasing the magnitude of the induced update, we clip its
$\ell_2$ norm to a maximum value of $2.0$.
We then evaluate the first-order effect of $\tilde v(Z^s)$ on the reference
risk using the objective defined in Section~3.1. This objective is
backpropagated to $Z^s$, which is updated with SGD. The model parameters and
prediction targets remain fixed throughout synthesis. After optimization, we
convert $Z^s$ to BF16 to obtain the continuous synthetic data
$S_{\mathrm{syn}}$.

\paragraph{Downstream adaptation and evaluation.}
We pair $S_{\mathrm{syn}}$ with the retained prediction targets and use the
continuous embeddings directly for fine-tuning, without projecting them to
discrete tokens. At this stage, the synthetic data remain fixed while LoRA
updates the model parameters. We evaluate the adapted model on independent
tasks and compare it with fine-tuning on the original text under the same
adaptation configuration and training budget.

\section{Experiments}
We evaluate DASA across multiple model scales, downstream tasks, calibration budgets, and data sources. Beyond overall adaptation performance, we examine robustness across random seeds, computational efficiency, sensitivity to optimization settings, and controlled baselines that isolate the effect of DASA's update-alignment objective. These experiments are designed to assess both the effectiveness and robustness of model-internal-signal-driven synthetic data construction.
\subsection{Experimental Setup}
\textbf{Models and Benchmarks.}
We evaluate our method on six instruction-tuned language models spanning two model families and multiple scales: Llama-3.2-1B-Instruct, Llama-3.2-3B-Instruct \citep{meta2024llama32}, Llama-3.1-8B-Instruct \citep{DBLP:journals/corr/abs-2407-21783}, and Qwen3-8B/14B/32B \citep{DBLP:journals/corr/abs-2505-09388}. We consider six downstream benchmarks, including MMLU \citep{DBLP:conf/iclr/HendrycksBBZMSS21}, GSM8K \citep{DBLP:journals/corr/abs-2110-14168}, MATH \citep{DBLP:conf/nips/HendrycksBKABTS21}, MBPP \citep{DBLP:journals/corr/abs-2108-07732}, ARC-Challenge (ARC-C) \citep{DBLP:journals/corr/abs-1803-05457}, and CommonsenseQA (CSQA) \citep{DBLP:conf/naacl/TalmorHLB19}.

\textbf{Baselines.}
We compare against three baselines. \textbf{Base} denotes the original model without downstream adaptation. \textbf{Real-text} fine-tunes the model using natural-language examples under the same data budget as our synthetic data. \textbf{GRADMM} \citep{DBLP:conf/icml/NguyenLBMRM25} constructs synthetic training examples through parameter-gradient matching. In contrast, \textbf{Ours} directly optimizes synthetic sequences such that the induced activation-gradient update aligns with the desired update direction. Throughout all experiments, including the computational-efficiency comparison, both DASA and our implementation of GRADMM use all Transformer layers and pass the optimized continuous embeddings directly to downstream LoRA adaptation without vocabulary projection. Real-text uses the same sampled reference sequences \(X^{(0)}\) that initialize DASA.

\textbf{LoRA Adaptation and Data Budget.}
We freeze the pretrained backbone and optimize LoRA parameters only. All methods use identical LoRA \citep{DBLP:conf/iclr/HuSWALWWC22}, optimization, and evaluation settings. We consider three data budgets, $50\times2048$, $100\times2048$, and $200\times2048$, where the first term is the number of sequences and 2048 is the maximum sequence length. 

\textbf{Evaluation protocol.}
We use accuracy for MMLU, ARC-Challenge, and CommonsenseQA; exact-match
accuracy for GSM8K and MATH after standard answer extraction, and pass@1
for MBPP using execution-based functional correctness. All methods are
evaluated with identical prompts, few-shot demonstrations, decoding
parameters, and evaluation scripts. Generative tasks use deterministic
decoding, so differences across methods arise from adaptation rather
than sampling variation.
\subsection{Main Results}
Table~\ref{tab:main_results} reports results across six models, six tasks, and three calibration budgets. DASA consistently improves over both the unadapted model and the parameter-gradient-matching baseline across model families and scales. In particular, compared with GRADMM, DASA achieves higher performance in 31 of 36 model--task configurations under the $50\times2048$ budget, 35 of 36 under $100\times2048$, and 35 of 36 under $200\times2048$. The advantage therefore becomes more consistent as the available synthetic-data budget increases rather than disappearing once additional training examples are introduced.

\textbf{Effectiveness across model scales.}
The improvements are observed from 1B-scale models to Qwen3-32B. Under the $200\times2048$ budget, for example, DASA improves over GRADMM on six of six tasks for Llama-3.2-1B and Llama-3.1-8B, while outperforming GRADMM on all six benchmarks for both Qwen3-14B and Qwen3-32B. On Qwen3-32B, DASA improves MMLU from 84.65 to 85.26, GSM8K from 96.36 to 97.19, MATH from 87.56 to 88.28, ARC-C from 71.76 to 72.44, and CSQA from 85.83 to 86.65. The same qualitative behavior on substantially different model scales suggests that activation-level update realization is not tied to a particular parameter regime.

\textbf{Effectiveness across capabilities.}
DASA also provides gains across qualitatively different task families. On Llama-3.2-3B with the largest budget, DASA improves MATH from 54.12 to 55.16 and MBPP from 71.2 to 72.0 over GRADMM, while improving MMLU from 61.20 to 61.96 and CSQA from 76.41 to 77.23. Similarly, on Qwen3-8B, it improves MATH from 86.44 to 87.40, MBPP from 62.2 to 63.0, and ARC-C from 66.30 to 67.24. These results indicate that the gains are not concentrated in one evaluation category, but extend to knowledge, reasoning, coding, and commonsense tasks.

\textbf{Effect of the calibration budget.}
Increasing the calibration budget generally strengthens downstream adaptation for all synthesis methods, but DASA benefits particularly consistently from additional calibration data. Its relative number of wins over GRADMM increases from 31/36 to 35/36 as the budget grows from $50\times2048$ to $200\times2048$. This observation is important because a synthetic-data method may perform well at extremely small budgets simply by exploiting a few favorable examples. Instead, DASA continues to produce useful updates as the synthetic set grows, suggesting that activation alignment provides a stable optimization target beyond the low-data regime.

An additional feature of this experiment is that all downstream tasks use calibration samples drawn from the same cleaned C4 corpus. The model is therefore not simply trained on natural examples drawn from the target benchmark distribution. Nevertheless, the induced updates consistently improve task performance, supporting the central premise that the update signal itself, rather than direct semantic correspondence between calibration text and evaluation examples, can drive useful downstream adaptation.

\begin{table}[H]
\caption{\textbf{Main results across models and tasks.}
Base, Real-text, GRADMM, and DASA under three C4 calibration budgets; all adaptation methods use identical LoRA and optimization settings.}
\label{tab:main_results}
\centering
\tiny
\renewcommand{\arraystretch}{0.85}
\setlength{\tabcolsep}{1.5pt}

\resizebox{\textwidth}{!}{%
\begin{tabular}{l *{18}{c}}
\toprule
& \multicolumn{18}{c}{
\cellcolor{gray!15}
\textbf{Token Budget: 50$\times$2048}
} \\
\cmidrule(lr){2-19}
& \multicolumn{6}{c}{\cellcolor{gray!10}\textbf{Llama-3.2-1B}}
& \multicolumn{6}{c}{\cellcolor{gray!10}\textbf{Llama-3.2-3B}}
& \multicolumn{6}{c}{\cellcolor{gray!10}\textbf{Llama-3.1-8B}} \\
\cmidrule(lr){2-7}
\cmidrule(lr){8-13}
\cmidrule(lr){14-19}
\textbf{Method}
& MMLU & G8K & MATH & MBPP & ARC & CSQA
& MMLU & G8K & MATH & MBPP & ARC & CSQA
& MMLU & G8K & MATH & MBPP & ARC & CSQA \\
\midrule
Base
& 45.79 & 46.32 & 32.70 & 49.6 & 54.61 & 59.13
& 58.62 & 82.03 & 51.40 & 68.4 & 73.72 & 73.87
& 65.41 & 85.90 & 50.70 & 75.4 & 82.08 & 75.92 \\

Real-text
& 45.91
& 46.55
& 32.96
& 50.0
& \cellcolor{blue!15}\textbf{54.86}
& 59.30
& 58.76
& 82.18
& 51.60
& 68.8
& 73.81
& \cellcolor{blue!15}\textbf{74.20}
& 65.45
& \cellcolor{blue!15}\textbf{86.20}
& 50.88
& 75.6
& 82.34
& 76.00 \\

GRADMM
& 46.10
& 47.08
& 33.40
& 50.4
& 54.35
& 59.54
& 58.98
& 82.56
& 51.20
& 69.0
& 74.06
& 74.20
& 65.72
& 86.15
& 51.16
& 76.0
& 82.42
& 76.17 \\

\textbf{Ours}
& \cellcolor{blue!15}\textbf{46.46}
& \cellcolor{blue!15}\textbf{46.63}
& \cellcolor{blue!15}\textbf{34.20}
& \cellcolor{blue!15}\textbf{50.8}
& 54.69
& \cellcolor{blue!15}\textbf{59.95}
& \cellcolor{blue!15}\textbf{59.22}
& \cellcolor{blue!15}\textbf{82.87}
& \cellcolor{blue!15}\textbf{51.66}
& \cellcolor{blue!15}\textbf{69.6}
& \cellcolor{blue!15}\textbf{74.32}
& 74.04
& \cellcolor{blue!15}\textbf{65.56}
& 86.05
& \cellcolor{blue!15}\textbf{51.56}
& \cellcolor{blue!15}\textbf{76.4}
& \cellcolor{blue!15}\textbf{82.68}
& \cellcolor{blue!15}\textbf{76.41} \\

\midrule
& \multicolumn{6}{c}{\cellcolor{gray!10}\textbf{Qwen3-8B}}
& \multicolumn{6}{c}{\cellcolor{gray!10}\textbf{Qwen3-14B}}
& \multicolumn{6}{c}{\cellcolor{gray!10}\textbf{Qwen3-32B}} \\
\cmidrule(lr){2-7}
\cmidrule(lr){8-13}
\cmidrule(lr){14-19}

\textbf{Method}
& MMLU & G8K & MATH & MBPP & ARC & CSQA
& MMLU & G8K & MATH & MBPP & ARC & CSQA
& MMLU & G8K & MATH & MBPP & ARC & CSQA \\
\midrule

Base
& 74.90 & 91.89 & 83.98 & 59.6 & 64.33 & 81.98
& 78.85 & 93.86 & 84.76 & 64.6 & 66.98 & 81.00
& 82.11 & 94.54 & 85.26 & 65.8 & 69.28 & 84.03 \\

Real-text
& 75.05
& 92.04
& 84.12
& 59.8
& \cellcolor{blue!15}\textbf{64.68}
& 81.98
& 78.93
& \cellcolor{blue!15}\textbf{94.16}
& 84.88
& 64.8
& 67.15
& \cellcolor{blue!15}\textbf{81.33}
& \cellcolor{blue!15}\textbf{82.30}
& 94.69
& 85.34
& 66.2
& 69.37
& 84.28 \\

GRADMM
& 75.24
& 92.27
& 84.32
& 60.2
& 64.68
& 82.23
& 79.04
& 93.31
& 85.10
& 65.0
& 67.32
& 81.07
& 82.11
& 94.69
& 85.48
& 66.2
& 69.54
& 84.28 \\

\textbf{Ours}
& \cellcolor{blue!15}\textbf{75.49}
& \cellcolor{blue!15}\textbf{92.49}
& \cellcolor{blue!15}\textbf{84.60}
& \cellcolor{blue!15}\textbf{60.8}
& 64.59
& \cellcolor{blue!15}\textbf{82.31}
& \cellcolor{blue!15}\textbf{79.08}
& 94.09
& \cellcolor{blue!15}\textbf{85.30}
& \cellcolor{blue!15}\textbf{65.6}
& \cellcolor{blue!15}\textbf{67.49}
& 81.24
& 82.24
& \cellcolor{blue!15}\textbf{94.77}
& \cellcolor{blue!15}\textbf{85.64}
& \cellcolor{blue!15}\textbf{66.6}
& \cellcolor{blue!15}\textbf{69.71}
& \cellcolor{blue!15}\textbf{84.52} \\

\midrule
\midrule
& \multicolumn{18}{c}{
\cellcolor{gray!15}
\textbf{Token Budget: 100$\times$2048}
} \\
\cmidrule(lr){2-19}
& \multicolumn{6}{c}{\cellcolor{gray!10}\textbf{Llama-3.2-1B}}
& \multicolumn{6}{c}{\cellcolor{gray!10}\textbf{Llama-3.2-3B}}
& \multicolumn{6}{c}{\cellcolor{gray!10}\textbf{Llama-3.1-8B}} \\
\cmidrule(lr){2-7}
\cmidrule(lr){8-13}
\cmidrule(lr){14-19}

\textbf{Method}
& MMLU & G8K & MATH & MBPP & ARC & CSQA
& MMLU & G8K & MATH & MBPP & ARC & CSQA
& MMLU & G8K & MATH & MBPP & ARC & CSQA \\
\midrule

Base
& 45.79 & 46.32 & 32.70 & 49.6 & 54.61 & 59.13
& 58.62 & 82.03 & 51.40 & 68.4 & 73.72 & 73.87
& 65.41 & 85.90 & 50.70 & 75.4 & 82.08 & 75.92 \\

Real-text
& 46.05
& \cellcolor{blue!15}\textbf{46.93}
& 33.12
& 50.0
& \cellcolor{blue!15}\textbf{54.86}
& 59.38
& 58.83
& 82.34
& \cellcolor{blue!15}\textbf{51.72}
& 68.8
& 73.89
& 74.12
& \cellcolor{blue!15}\textbf{65.59}
& 86.20
& 51.00
& 75.6
& 82.25
& 76.09 \\

GRADMM
& 46.44 & 46.69 & 33.88 & 50.8 & 54.29 & 59.79
& 59.20 & 82.94 & 51.32 & 69.4 & 74.23 & 74.45
& 65.11 & 86.66 & 51.48 & 76.2 & 82.51 & 76.41 \\

\textbf{Ours}
& \cellcolor{blue!15}\textbf{47.21}
& 46.78
& \cellcolor{blue!15}\textbf{35.02}
& \cellcolor{blue!15}\textbf{52.0}
& 54.69
& \cellcolor{blue!15}\textbf{60.69}
& \cellcolor{blue!15}\textbf{59.91}
& \cellcolor{blue!15}\textbf{84.00}
& 51.60
& \cellcolor{blue!15}\textbf{70.8}
& \cellcolor{blue!15}\textbf{75.00}
& \cellcolor{blue!15}\textbf{75.18}
& 65.50
& \cellcolor{blue!15}\textbf{87.49}
& \cellcolor{blue!15}\textbf{52.38}
& \cellcolor{blue!15}\textbf{77.4}
& \cellcolor{blue!15}\textbf{83.19}
& \cellcolor{blue!15}\textbf{77.07} \\

\midrule
& \multicolumn{6}{c}{\cellcolor{gray!10}\textbf{Qwen3-8B}}
& \multicolumn{6}{c}{\cellcolor{gray!10}\textbf{Qwen3-14B}}
& \multicolumn{6}{c}{\cellcolor{gray!10}\textbf{Qwen3-32B}} \\
\cmidrule(lr){2-7}
\cmidrule(lr){8-13}
\cmidrule(lr){14-19}

\textbf{Method}
& MMLU & G8K & MATH & MBPP & ARC & CSQA
& MMLU & G8K & MATH & MBPP & ARC & CSQA
& MMLU & G8K & MATH & MBPP & ARC & CSQA \\
\midrule

Base
& 74.90 & 91.89 & 83.98 & 59.6 & 64.33 & 81.98
& 78.85 & 93.86 & 84.76 & 64.6 & 66.98 & 81.00
& 82.11 & 94.54 & 85.26 & 65.8 & 69.28 & 84.03 \\

Real-text
& 75.07
& 92.12
& 84.18
& 59.8
& 64.51
& \cellcolor{blue!15}\textbf{82.15}
& \cellcolor{blue!15}\textbf{79.00}
& 94.01
& 84.92
& 64.8
& 67.15
& 81.16
& 82.23
& \cellcolor{blue!15}\textbf{94.69}
& 85.40
& 66.0
& 69.37
& 84.11 \\

GRADMM
& 75.37 & 92.42 & 84.56 & 60.4 & 64.76 & 82.09
& 78.27 & 94.31 & 85.24 & 65.4 & 67.41 & 81.41
& 82.45 & 94.02 & 85.64 & 66.4 & 69.62 & 84.36 \\

\textbf{Ours}
& \cellcolor{blue!15}\textbf{75.98}
& \cellcolor{blue!15}\textbf{93.10}
& \cellcolor{blue!15}\textbf{85.24}
& \cellcolor{blue!15}\textbf{61.8}
& \cellcolor{blue!15}\textbf{65.44}
& 82.06
& 78.94
& \cellcolor{blue!15}\textbf{94.84}
& \cellcolor{blue!15}\textbf{85.84}
& \cellcolor{blue!15}\textbf{66.4}
& \cellcolor{blue!15}\textbf{68.00}
& \cellcolor{blue!15}\textbf{81.98}
& \cellcolor{blue!15}\textbf{82.91}
& 94.62
& \cellcolor{blue!15}\textbf{86.16}
& \cellcolor{blue!15}\textbf{67.4}
& \cellcolor{blue!15}\textbf{70.14}
& \cellcolor{blue!15}\textbf{84.85} \\

\midrule
\midrule
& \multicolumn{18}{c}{
\cellcolor{gray!25}
\textbf{Token Budget: 200$\times$2048}
} \\
\cmidrule(lr){2-19}
& \multicolumn{6}{c}{\cellcolor{gray!10}\textbf{Llama-3.2-1B}}
& \multicolumn{6}{c}{\cellcolor{gray!10}\textbf{Llama-3.2-3B}}
& \multicolumn{6}{c}{\cellcolor{gray!10}\textbf{Llama-3.1-8B}} \\
\cmidrule(lr){2-7}
\cmidrule(lr){8-13}
\cmidrule(lr){14-19}

\textbf{Method}
& MMLU & G8K & MATH & MBPP & ARC & CSQA
& MMLU & G8K & MATH & MBPP & ARC & CSQA
& MMLU & G8K & MATH & MBPP & ARC & CSQA \\
\midrule

Base
& 45.79 & 46.32 & 32.70 & 49.6 & 54.61 & 59.13
& 58.62 & 82.03 & 51.40 & 68.4 & 73.72 & 73.87
& 65.41 & 85.90 & 50.70 & 75.4 & 82.08 & 75.92 \\

Real-text
& \cellcolor{blue!15}\textbf{48.01}
& 48.67
& 34.88
& 51.8
& 56.91
& 61.26
& 60.73
& \cellcolor{blue!15}\textbf{84.23}
& 53.76
& 70.6
& \cellcolor{blue!15}\textbf{75.77}
& 76.17
& 67.49
& 88.10
& \cellcolor{blue!15}\textbf{52.92}
& 77.6
& 84.13
& 78.13 \\

GRADMM
& 47.43 & 48.45 & 35.20 & 52.4 & 56.57 & 61.59
& 61.20 & 84.08 & 54.12 & 71.2 & 75.51 & 76.41
& 67.14 & 88.48 & 52.24 & 78.0 & 83.96 & 78.38 \\

\textbf{Ours}
& 47.88
& \cellcolor{blue!15}\textbf{48.90}
& \cellcolor{blue!15}\textbf{36.12}
& \cellcolor{blue!15}\textbf{53.4}
& \cellcolor{blue!15}\textbf{57.17}
& \cellcolor{blue!15}\textbf{62.49}
& \cellcolor{blue!15}\textbf{61.96}
& 84.08
& \cellcolor{blue!15}\textbf{55.16}
& \cellcolor{blue!15}\textbf{72.0}
& 75.60
& \cellcolor{blue!15}\textbf{77.23}
& \cellcolor{blue!15}\textbf{67.63}
& \cellcolor{blue!15}\textbf{89.46}
& 52.82
& \cellcolor{blue!15}\textbf{79.0}
& \cellcolor{blue!15}\textbf{84.73}
& \cellcolor{blue!15}\textbf{79.28} \\

\midrule
& \multicolumn{6}{c}{\cellcolor{gray!10}\textbf{Qwen3-8B}}
& \multicolumn{6}{c}{\cellcolor{gray!10}\textbf{Qwen3-14B}}
& \multicolumn{6}{c}{\cellcolor{gray!10}\textbf{Qwen3-32B}} \\
\cmidrule(lr){2-7}
\cmidrule(lr){8-13}
\cmidrule(lr){14-19}

\textbf{Method}
& MMLU & G8K & MATH & MBPP & ARC & CSQA
& MMLU & G8K & MATH & MBPP & ARC & CSQA
& MMLU & G8K & MATH & MBPP & ARC & CSQA \\
\midrule

Base
& 74.90 & 91.89 & 83.98 & 59.6 & 64.33 & 81.98
& 78.85 & 93.86 & 84.76 & 64.6 & 66.98 & 81.00
& 82.11 & 94.54 & 85.26 & 65.8 & 69.28 & 84.03 \\

Real-text
& 76.97
& 93.93
& 86.10
& 61.8
& 66.47
& \cellcolor{blue!15}\textbf{84.11}
& \cellcolor{blue!15}\textbf{80.89}
& 95.91
& 86.84
& 66.8
& 69.11
& 83.05
& 84.12
& 96.59
& 87.32
& \cellcolor{blue!15}\textbf{68.0}
& 71.33
& 86.08 \\

GRADMM
& 77.41 & 93.63 & 86.44 & 62.2 & 66.30 & 83.36
& 80.51 & 96.29 & 87.32 & 66.4 & 69.37 & 82.72
& 84.65 & 96.36 & 87.56 & 67.6 & 71.76 & 85.83 \\

\textbf{Ours}
& \cellcolor{blue!15}\textbf{78.09}
& \cellcolor{blue!15}\textbf{94.16}
& \cellcolor{blue!15}\textbf{87.40}
& \cellcolor{blue!15}\textbf{63.0}
& \cellcolor{blue!15}\textbf{67.24}
& 83.95
& 80.81
& \cellcolor{blue!15}\textbf{97.12}
& \cellcolor{blue!15}\textbf{88.08}
& \cellcolor{blue!15}\textbf{67.2}
& \cellcolor{blue!15}\textbf{70.14}
& \cellcolor{blue!15}\textbf{83.62}
& \cellcolor{blue!15}\textbf{85.26}
& \cellcolor{blue!15}\textbf{97.19}
& \cellcolor{blue!15}\textbf{88.28}
& 67.8
& \cellcolor{blue!15}\textbf{72.44}
& \cellcolor{blue!15}\textbf{86.65} \\

\bottomrule
\end{tabular}%
}
\end{table}
\subsection{Robustness Across Random Seeds}

The main experiment covers many model--task combinations and therefore reports single-run results. To assess run-to-run variation, we evaluate representative configurations over three independent runs on MMLU, GSM8K, and MBPP for Qwen3-8B and Qwen3-14B under the $200\times2048$ data budget. Table~\ref{tab:multiseed} reports the mean and standard deviation.

\begin{table}[H]
\caption{\textbf{Robustness across random seeds.} Mean $\pm$ standard deviation over three independent runs under the $200\times2048$ data budget. Random uses Gaussian inputs matched to the mean and standard deviation of DASA's initialization embeddings.}
\label{tab:multiseed}
\centering
\scriptsize
\renewcommand{\arraystretch}{0.98}
\setlength{\tabcolsep}{3.5pt}
\begin{tabular}{l *{6}{c}}
\toprule
& \multicolumn{3}{c}{\cellcolor{gray!10}\textbf{Qwen3-8B}}
& \multicolumn{3}{c}{\cellcolor{gray!10}\textbf{Qwen3-14B}} \\
\cmidrule(lr){2-4}
\cmidrule(lr){5-7}
\textbf{Method}
& \textbf{MMLU} $\uparrow$
& \textbf{GSM8K} $\uparrow$
& \textbf{MBPP} $\uparrow$
& \textbf{MMLU} $\uparrow$
& \textbf{GSM8K} $\uparrow$
& \textbf{MBPP} $\uparrow$ \\
\midrule
Random
& $71.8 \pm 0.35$
& $87.9 \pm 0.48$
& $55.2 \pm 0.8$
& $77.9 \pm 0.31$
& $90.1 \pm 0.41$
& $62.6 \pm 0.6$ \\
Real-text
& $76.97 \pm 0.13$
& $\mathbf{93.93 \pm 0.21}$
& $61.8 \pm 0.6$
& $80.89 \pm 0.17$
& $95.91 \pm 0.22$
& $\mathbf{66.8 \pm 0.4}$ \\
GRADMM
& $77.41 \pm 0.15$
& $93.63 \pm 0.24$
& $62.2 \pm 0.6$
& $80.51 \pm 0.16$
& $96.29 \pm 0.22$
& $66.4 \pm 0.4$ \\
\textbf{Ours}
& $\mathbf{78.09 \pm 0.16}$
& $93.78 \pm 0.33$
& $\mathbf{63.0 \pm 0.6}$
& $\mathbf{81.15 \pm 0.29}$
& $\mathbf{97.12 \pm 0.23}$
& $66.6 \pm 0.6$ \\
\bottomrule
\end{tabular}
\end{table}

\textbf{Random baseline.} For each run, Random replaces the optimized synthetic inputs with a tensor whose entries are sampled independently from $\mathcal N(\mu_E,\sigma_E^2)$, where $\mu_E=\operatorname{mean}(Z_{\mathrm{init}})$ and $\sigma_E=\operatorname{std}(Z_{\mathrm{init}})$ are computed over all entries of the corresponding DASA initialization tensor. The random tensor receives no synthetic-input optimization and is evaluated with the same data budget and downstream LoRA configuration.

DASA achieves a higher mean than both GRADMM and Random in all six settings, and the highest mean among all methods in four of them. On Qwen3-8B, DASA improves MMLU from $77.41\pm0.15$ with GRADMM to $78.09\pm0.16$, and MBPP from $62.2\pm0.6$ to $63.0\pm0.6$. On Qwen3-14B, its GSM8K result is $97.12\pm0.23$, compared with $96.29\pm0.22$ for GRADMM. Real-text retains the highest mean on Qwen3-8B GSM8K and Qwen3-14B MBPP.

The standard deviations of DASA are generally comparable to those of Real-text and GRADMM, although DASA has a higher standard deviation on Qwen3-14B MBPP. These results describe the consistency of the observed mean gains across the representative configurations. Three runs per setting are insufficient to establish statistical significance.

\subsection{Task-Specialized Calibration Data}

The main experiments deliberately use generic C4 data to separate update realization from direct access to task-specific examples. We next consider a complementary setting in which informative in-domain calibration data are available. For each benchmark, calibration examples are drawn from its corresponding training split: MMLU auxiliary train, GSM8K train, MATH train, MBPP train, ARC-Challenge train, and CommonsenseQA train. The LoRA configuration, optimization budget, and evaluation protocol remain unchanged, so the only difference from the main setting is the source of the calibration data.

Table~\ref{tab:specialized_data} shows that DASA continues to outperform GRADMM in all 12 model--task comparisons across Qwen3-8B and Qwen3-14B. It achieves the strongest overall results in eight of the twelve settings, including MMLU, MATH, ARC-C, and CSQA, across both model scales. On Qwen3-14B, for example, DASA reaches 82.91 on MMLU, 89.18 on MATH, 71.59 on ARC-C, and 84.52 on CSQA, compared with 82.17, 88.38, 70.73, and 83.78 for GRADMM.

The improvement remains visible even though task-specialized calibration substantially strengthens the competing methods. This is an important complement to the C4 experiment: DASA is not effective only because generic calibration makes parameter-gradient matching difficult. Instead, its advantage persists when the calibration distribution is closely related to the target task and provides a substantially stronger adaptation signal.

Real-text remains slightly stronger on GSM8K and MBPP, where directly fine-tuning on task-specific natural-language examples is particularly competitive. Nevertheless, DASA consistently improves upon parameter-gradient matching across every evaluated configuration. Taken together with the generic-C4 results, these experiments indicate that activation-aligned synthesis is effective under substantially different calibration distributions rather than relying on a particular source of calibration text.

\begin{table}[H]
\caption{\textbf{Results with task-specialized calibration data.}
Calibration data are drawn from the corresponding training split for each benchmark: MMLU auxiliary train, GSM8K train, MATH train, MBPP train, ARC-Challenge train, and CommonsenseQA train. All adaptation methods use identical LoRA and optimization settings.}
\label{tab:specialized_data}
\centering
\scriptsize
\renewcommand{\arraystretch}{0.98}
\setlength{\tabcolsep}{1.8pt}
\begin{tabular}{l *{12}{c}}
\toprule
\multicolumn{13}{c}{\cellcolor{gray!10}\textbf{Specialized Data}} \\
\midrule
\multirow{2}{*}{\textbf{Method}} & \multicolumn{6}{c}{\cellcolor{gray!10}\textbf{Qwen3-8B}} & \multicolumn{6}{c}{\cellcolor{gray!10}\textbf{Qwen3-14B}} \\
\cmidrule(lr){2-7} \cmidrule(lr){8-13}
& MMLU & GSM8K & MATH & MBPP & ARC-C & CSQA & MMLU & GSM8K & MATH & MBPP & ARC-C & CSQA \\
\midrule
Base & 74.90 & 91.89 & 83.98 & 59.6 & 64.33 & 81.98 & 78.85 & 93.86 & 84.76 & 64.6 & 66.98 & 81.00 \\
Real-text & 78.02 & \cellcolor{blue!15}\textbf{94.62} & 87.18 & \cellcolor{blue!15}\textbf{64.0} & 67.49 & 85.01 & 81.82 & \cellcolor{blue!15}\textbf{96.36} & 88.00 & \cellcolor{blue!15}\textbf{69.0} & 70.39 & 84.03 \\
GRADMM & 78.41 & 94.39 & 87.54 & 62.6 & 67.83 & 84.77 & 82.17 & 96.13 & 88.38 & 68.6 & 70.73 & 83.78 \\
\textbf{Ours} & \cellcolor{blue!15}\textbf{79.26} & 94.54 & \cellcolor{blue!15}\textbf{88.36} & 62.8 & \cellcolor{blue!15}\textbf{68.77} & \cellcolor{blue!15}\textbf{85.42} & \cellcolor{blue!15}\textbf{82.91} & 96.29 & \cellcolor{blue!15}\textbf{89.18} & 68.8 & \cellcolor{blue!15}\textbf{71.59} & \cellcolor{blue!15}\textbf{84.52} \\
\bottomrule
\end{tabular}
\end{table}

\subsection{Computational Efficiency}
Matching parameter gradients across all Transformer layers requires repeated computation and comparison of high-dimensional gradients during synthetic-data construction. We therefore compare DASA with our all-layer implementation of GRADMM in terms of generation time and peak GPU memory under the same three calibration budgets.

All efficiency measurements are collected on a single NVIDIA A800 GPU using FP16 precision. Both methods use all Transformer layers and optimize continuous input embeddings, which are used directly for downstream LoRA adaptation without vocabulary projection. For each calibration budget, they use the same number of devices, synthetic sequences, maximum sequence length, and synthesis steps. Peak memory is recorded over the complete generation procedure. These matched settings allow us to compare the computational costs of the two synthesis procedures.

As shown in Table~\ref{tab:efficiency}, DASA substantially reduces generation time. At budgets of $50\times2048$, $100\times2048$, and $200\times2048$, GRADMM requires 28.3, 57.4, and 84.2 minutes, whereas DASA requires only 5.8, 11.6, and 23.3 minutes, respectively. This corresponds to approximately $3.6$--$4.9\times$ faster synthetic-data generation. The reduction is consistent with DASA evaluating directional effects through activation gradients, while GRADMM matches parameter gradients across all Transformer layers.

Peak GPU memory is similar for the two methods, remaining between 17.3 and 17.9 GB across all settings. Thus, DASA's measured computational benefit is a substantially shorter generation time, with comparable peak memory.
\begin{table}[H]
\caption{\textbf{Computational efficiency.}
Generation time and peak GPU memory under three calibration budgets, measured on a single NVIDIA A800 GPU using FP16. GRADMM and DASA use matched sequence counts, sequence lengths, and synthetic-data optimization steps.}
\label{tab:efficiency}
\centering
\small
\renewcommand{\arraystretch}{1.0}
\setlength{\tabcolsep}{3.2pt}
\begin{tabular}{lccc|ccc}
\toprule
& \multicolumn{3}{c|}{\cellcolor{gray!10}\textbf{Generation Time (min)} $\downarrow$}
& \multicolumn{3}{c}{\cellcolor{gray!10}\textbf{Peak Memory (GB)} $\downarrow$} \\
\cmidrule(lr){2-4}
\cmidrule(lr){5-7}
\textbf{Method}
& $\mathbf{50\times2048}$
& $\mathbf{100\times2048}$
& $\mathbf{200\times2048}$
& $\mathbf{50\times2048}$
& $\mathbf{100\times2048}$
& $\mathbf{200\times2048}$ \\
\midrule
GRADMM
& 28.3
& 57.4
& 84.2
& 17.3
& 17.7
& 17.9 \\
\textbf{DASA}
& \cellcolor{blue!15}\textbf{5.8}
& \cellcolor{blue!15}\textbf{11.6}
& \cellcolor{blue!15}\textbf{23.3}
& \cellcolor{blue!15}\textbf{17.3}
& \cellcolor{blue!15}\textbf{17.4}
& \cellcolor{blue!15}\textbf{17.9} \\
\bottomrule
\end{tabular}
\end{table}

\subsection{Sensitivity Analysis}

We examine whether DASA depends on a narrowly selected downstream
adaptation configuration. We vary the LoRA rank, which controls the
capacity of the parameter update, and the number of downstream LoRA
optimization steps. The synthetic-data optimization schedule is held
fixed. Figure~\ref{fig:sensitivity} reports validation accuracy for
adaptation on synthetic and real-text inputs under these configurations.

\begin{figure}[H]
    \centering
    \includegraphics[width=0.8\linewidth]{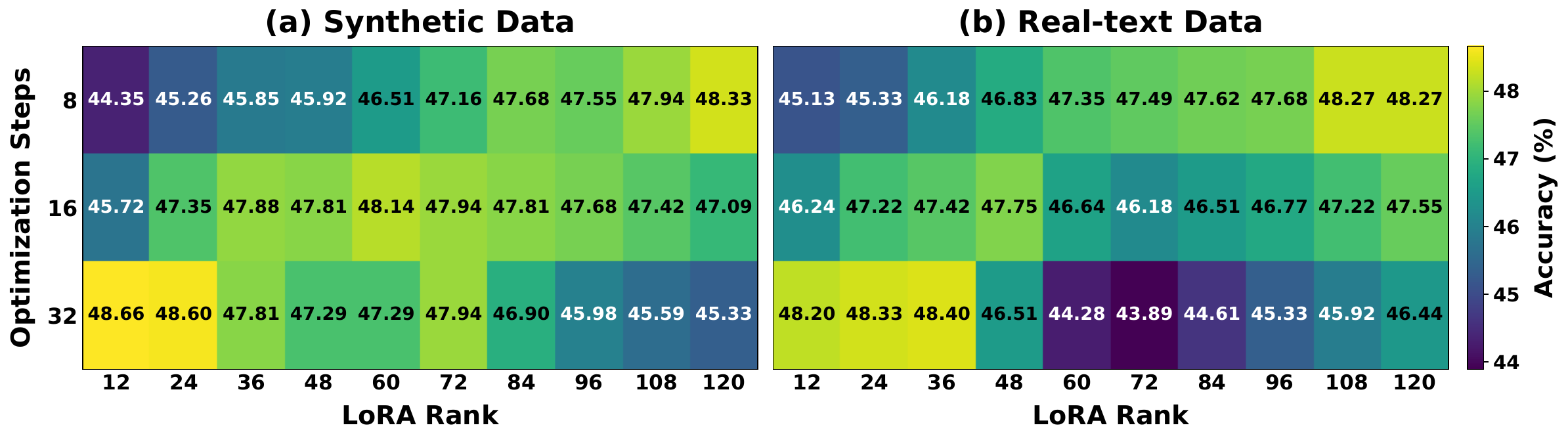}
    \caption{\textbf{Sensitivity to LoRA rank and downstream optimization
    steps.} Validation accuracy (\%) under different LoRA ranks and
    numbers of downstream LoRA optimization steps for synthetic and
    real-text adaptation.}
    \label{fig:sensitivity}
\end{figure}

DASA performs competitively across a range of configurations rather
than relying on a single choice of rank and training duration. At
smaller LoRA ranks, additional downstream steps generally improve
validation accuracy. Larger ranks reach strong performance with fewer
steps, while continued adaptation can reduce accuracy. The real-text
results show a similar dependence on rank and training duration,
although the best configurations differ.

These results suggest that LoRA rank and downstream optimization steps
are partly substitutable: a higher rank expands the space of possible
updates, while more steps allow the model to use a fixed update space
more fully. Several intermediate configurations achieve similar
accuracy, indicating that DASA's performance does not depend on a
narrowly tuned LoRA configuration.

\subsection{Qualitative Analysis of Synthetic Data}

Figure~\ref{fig:example} compares a human-readable solution with a decoded visualization of an optimized DASA sample. The visualization projects the continuous synthetic embeddings to vocabulary tokens for inspection only. Downstream adaptation uses the original continuous embeddings.

The projected sample contains fragmented words, mixed scripts, and locally incoherent token combinations. This example illustrates that the synthetic inputs need not retain a fluent natural-language surface form. Because token projection loses information from the continuous embeddings, we use downstream task performance, rather than the readability of the displayed text, to assess their training utility.

The continuous synthetic inputs are optimized in the model's embedding space to induce useful update directions, and their training value is therefore determined by how they interact with the model rather than by whether they admit an interpretable token-level rendering. The visualization in Figure~\ref{fig:example} should thus be viewed as evidence of a separation between human readability and model utility, rather than as a faithful representation of the information contained in the continuous synthetic inputs.

\begin{figure}[H]
    \centering
    \includegraphics[width=0.78\linewidth]{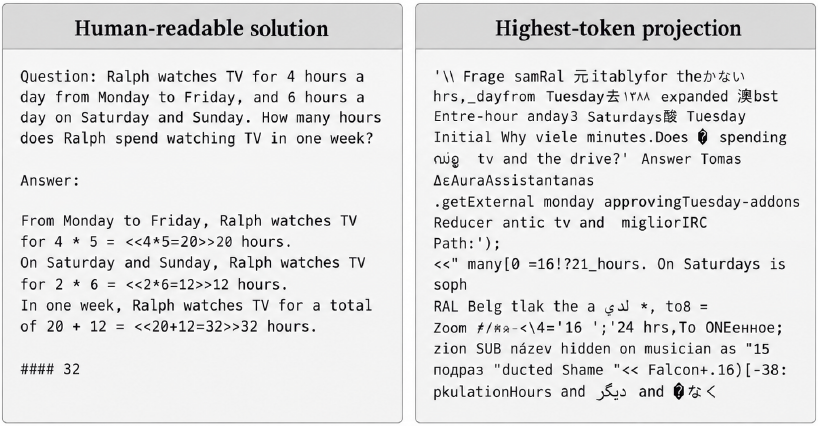}
    \caption{\textbf{Qualitative example of DASA synthetic data.} A human-readable solution is shown alongside a decoded visualization of an optimized continuous synthetic sample. }
    \label{fig:example}
\end{figure}

\section{Conclusion}

We investigated whether effective LLM fine-tuning requires human-readable training inputs. DASA constructs continuous synthetic input embeddings by optimizing the first-order effect of their induced direction on a reference loss, while keeping the pretrained model fixed during synthesis. Across six models, six benchmarks, and three data budgets, the resulting embeddings provide downstream LoRA adaptation that is often comparable to real-text fine-tuning and outperforms GRADMM in most evaluated model--task settings. Under matched synthesis workloads, DASA also reduces generation time by $3.6$--$4.9\times$ with comparable peak GPU memory. These findings show that, in the evaluated adaptation settings, useful training signals can be carried by model-conditioned continuous inputs without preserving readable language. The directional objective provides a local criterion rather than a guarantee for finite-step fine-tuning; understanding its relationship to downstream updates and transfer across models remains future work.
\section*{AI Use Statement}

We used GPT-6 solely for language polishing. All
AI-assisted edits were reviewed by the authors, who take full
responsibility for the final manuscript.

\bibliography{iclr2027_conference}

@article{DBLP:journals/corr/abs-2407-21783,
  author       = {Llama Team},
  title        = {The Llama 3 Herd of Models},
  journal      = {CoRR},
  volume       = {abs/2407.21783},
  year         = {2024},
  url          = {https://doi.org/10.48550/arXiv.2407.21783},
  doi          = {10.48550/ARXIV.2407.21783},
  eprinttype   = {arXiv},
  eprint       = {2407.21783},
  bibsource    = {dblp computer science bibliography, https://dblp.org}
}

@misc{meta2024llama32,
  author       = {{Meta}},
  title        = {Llama 3.2 Model Card},
  year         = {2024},
  howpublished = {\url{https://github.com/meta-llama/llama-models/blob/main/models/llama3_2/MODEL_CARD.md}},
  note         = {Accessed 2026-09-20}
}

@inproceedings{DBLP:conf/iclr/HendrycksBBZMSS21,
  author       = {Dan Hendrycks and
                  Collin Burns and
                  Steven Basart and
                  Andy Zou and
                  Mantas Mazeika and
                  Dawn Song and
                  Jacob Steinhardt},
  title        = {Measuring Massive Multitask Language Understanding},
  booktitle    = {9th International Conference on Learning Representations, {ICLR} 2021,
                  Virtual Event, Austria, May 3-7, 2021},
  publisher    = {OpenReview.net},
  year         = {2021},
  url          = {https://openreview.net/forum?id=d7KBjmI3GmQ},
  bibsource    = {dblp computer science bibliography, https://dblp.org}
}

@article{DBLP:journals/corr/abs-2110-14168,
  author       = {Karl Cobbe and
                  Vineet Kosaraju and
                  Mohammad Bavarian and
                  Mark Chen and
                  Heewoo Jun and
                  Lukasz Kaiser and
                  Matthias Plappert and
                  Jerry Tworek and
                  Jacob Hilton and
                  Reiichiro Nakano and
                  Christopher Hesse and
                  John Schulman},
  title        = {Training Verifiers to Solve Math Word Problems},
  journal      = {CoRR},
  volume       = {abs/2110.14168},
  year         = {2021},
  url          = {https://arxiv.org/abs/2110.14168},
  eprinttype   = {arXiv},
  eprint       = {2110.14168},
  bibsource    = {dblp computer science bibliography, https://dblp.org}
}

@article{DBLP:journals/corr/abs-2505-09388,
  author       = {Qwen Team},
  title        = {Qwen3 Technical Report},
  journal      = {CoRR},
  volume       = {abs/2505.09388},
  year         = {2025},
  url          = {https://doi.org/10.48550/arXiv.2505.09388},
  doi          = {10.48550/ARXIV.2505.09388},
  eprinttype   = {arXiv},
  eprint       = {2505.09388},
  bibsource    = {dblp computer science bibliography, https://dblp.org}
}

@inproceedings{DBLP:conf/nips/HendrycksBKABTS21,
  author       = {Dan Hendrycks and
                  Collin Burns and
                  Saurav Kadavath and
                  Akul Arora and
                  Steven Basart and
                  Eric Tang and
                  Dawn Song and
                  Jacob Steinhardt},
  editor       = {Joaquin Vanschoren and
                  Sai{-}Kit Yeung},
  title        = {Measuring Mathematical Problem Solving With the {MATH} Dataset},
  booktitle    = {Proceedings of the Neural Information Processing Systems Track on
                  Datasets and Benchmarks 1, NeurIPS Datasets and Benchmarks 2021, December
                  2021, virtual},
  year         = {2021},
  url          = {https://datasets-benchmarks-proceedings.neurips.cc/paper/2021/hash/be83ab3ecd0db773eb2dc1b0a17836a1-Abstract-round2.html},
  bibsource    = {dblp computer science bibliography, https://dblp.org}
}

@article{DBLP:journals/corr/abs-2108-07732,
  author       = {Jacob Austin and
                  Augustus Odena and
                  Maxwell I. Nye and
                  Maarten Bosma and
                  Henryk Michalewski and
                  David Dohan and
                  Ellen Jiang and
                  Carrie J. Cai and
                  Michael Terry and
                  Quoc V. Le and
                  Charles Sutton},
  title        = {Program Synthesis with Large Language Models},
  journal      = {CoRR},
  volume       = {abs/2108.07732},
  year         = {2021},
  url          = {https://arxiv.org/abs/2108.07732},
  eprinttype   = {arXiv},
  eprint       = {2108.07732},
  bibsource    = {dblp computer science bibliography, https://dblp.org}
}

@article{DBLP:journals/corr/abs-1803-05457,
  author       = {Peter Clark and
                  Isaac Cowhey and
                  Oren Etzioni and
                  Tushar Khot and
                  Ashish Sabharwal and
                  Carissa Schoenick and
                  Oyvind Tafjord},
  title        = {Think you have Solved Question Answering? Try ARC, the {AI2} Reasoning
                  Challenge},
  journal      = {CoRR},
  volume       = {abs/1803.05457},
  year         = {2018},
  url          = {http://arxiv.org/abs/1803.05457},
  eprinttype   = {arXiv},
  eprint       = {1803.05457},
  bibsource    = {dblp computer science bibliography, https://dblp.org}
}

@inproceedings{DBLP:conf/naacl/TalmorHLB19,
  author       = {Alon Talmor and
                  Jonathan Herzig and
                  Nicholas Lourie and
                  Jonathan Berant},
  editor       = {Jill Burstein and
                  Christy Doran and
                  Thamar Solorio},
  title        = {CommonsenseQA: {A} Question Answering Challenge Targeting Commonsense
                  Knowledge},
  booktitle    = {Proceedings of the 2019 Conference of the North American Chapter of
                  the Association for Computational Linguistics: Human Language Technologies,
                  {NAACL-HLT} 2019, Minneapolis, MN, USA, June 2-7, 2019, Volume 1 (Long
                  and Short Papers)},
  pages        = {4149--4158},
  publisher    = {Association for Computational Linguistics},
  year         = {2019},
  url          = {https://doi.org/10.18653/v1/n19-1421},
  doi          = {10.18653/V1/N19-1421},
  bibsource    = {dblp computer science bibliography, https://dblp.org}
}

@inproceedings{DBLP:conf/iclr/HuSWALWWC22,
  author       = {Edward J. Hu and
                  Yelong Shen and
                  Phillip Wallis and
                  Zeyuan Allen{-}Zhu and
                  Yuanzhi Li and
                  Shean Wang and
                  Lu Wang and
                  Weizhu Chen},
  title        = {LoRA: Low-Rank Adaptation of Large Language Models},
  booktitle    = {The Tenth International Conference on Learning Representations, {ICLR}
                  2022, Virtual Event, April 25-29, 2022},
  publisher    = {OpenReview.net},
  year         = {2022},
  url          = {https://openreview.net/forum?id=nZeVKeeFYf9},
  bibsource    = {dblp computer science bibliography, https://dblp.org}
}

@inproceedings{DBLP:conf/icml/NguyenLBMRM25,
  author       = {Dang Nguyen and
                  Zeman Li and
                  MohammadHossein Bateni and
                  Vahab Mirrokni and
                  Meisam Razaviyayn and
                  Baharan Mirzasoleiman},
  editor       = {Aarti Singh and
                  Maryam Fazel and
                  Daniel Hsu and
                  Simon Lacoste{-}Julien and
                  Felix Berkenkamp and
                  Tegan Maharaj and
                  Kiri Wagstaff and
                  Jerry Zhu},
  title        = {Synthetic Text Generation for Training Large Language Models via Gradient
                  Matching},
  booktitle    = {Forty-second International Conference on Machine Learning, {ICML}
                  2025, Vancouver, BC, Canada, July 13-19, 2025},
  series       = {Proceedings of Machine Learning Research},
  volume       = {267},
  publisher    = {{PMLR} / OpenReview.net},
  year         = {2025},
  url          = {https://proceedings.mlr.press/v267/nguyen25c.html},
  bibsource    = {dblp computer science bibliography, https://dblp.org}
}

@misc{wang2018dataset,
  author        = {Wang, Tongzhou and Zhu, Jun-Yan and Torralba, Antonio
                   and Efros, Alexei A.},
  title         = {Dataset Distillation},
  year          = {2018},
  howpublished  = {arXiv preprint arXiv:1811.10959},
  eprint        = {1811.10959},
  archivePrefix = {arXiv},
  url           = {https://arxiv.org/abs/1811.10959}
}

@inproceedings{zhao2021gradient,
  author    = {Zhao, Bo and Mopuri, Konda Reddy and Bilen, Hakan},
  title     = {Dataset Condensation with Gradient Matching},
  booktitle = {International Conference on Learning Representations},
  year      = {2021},
  url       = {https://openreview.net/forum?id=mSAKhLYLSsl}
}

@inproceedings{zhao2021dsa,
  author    = {Zhao, Bo and Bilen, Hakan},
  title     = {Dataset Condensation with Differentiable Siamese Augmentation},
  booktitle = {Proceedings of the 38th International Conference on Machine Learning},
  series    = {Proceedings of Machine Learning Research},
  volume    = {139},
  pages     = {12674--12685},
  year      = {2021},
  publisher = {PMLR},
  url       = {https://proceedings.mlr.press/v139/zhao21a.html}
}

@inproceedings{zhao2023distribution,
  author    = {Zhao, Bo and Bilen, Hakan},
  title     = {Dataset Condensation with Distribution Matching},
  booktitle = {Proceedings of the IEEE/CVF Winter Conference on Applications of Computer Vision},
  year      = {2023},
  url       = {https://arxiv.org/abs/2110.04181}
}

@inproceedings{cazenavette2022trajectory,
  author    = {Cazenavette, George and Wang, Tongzhou and Torralba, Antonio
               and Efros, Alexei A. and Zhu, Jun-Yan},
  title     = {Dataset Distillation by Matching Training Trajectories},
  booktitle = {Proceedings of the IEEE/CVF Conference on Computer Vision and Pattern Recognition},
  year      = {2022},
  url       = {https://arxiv.org/abs/2203.11932}
}

@inproceedings{sucholutsky2021soft,
  author    = {Sucholutsky, Ilia and Schonlau, Matthias},
  title     = {Soft-Label Dataset Distillation and Text Dataset Distillation},
  booktitle = {2021 International Joint Conference on Neural Networks (IJCNN)},
  pages     = {1--8},
  year      = {2021},
  publisher = {IEEE},
  url       = {https://ieeexplore.ieee.org/document/9533769}
}

@inproceedings{maekawa2024dilm,
  author    = {Maekawa, Aru and Kosugi, Satoshi and Funakoshi, Kotaro
               and Okumura, Manabu},
  title     = {{DiLM}: Distilling Dataset into Language Model for Text-level Dataset Distillation},
  booktitle = {Findings of the Association for Computational Linguistics: NAACL 2024},
  pages     = {3138--3153},
  year      = {2024},
  publisher = {Association for Computational Linguistics},
  doi       = {10.18653/v1/2024.findings-naacl.199},
  url       = {https://aclanthology.org/2024.findings-naacl.199/}
}

@misc{yao2025transferable,
  author        = {Yao, Rong and Hu, Hailin and Fu, Yifei and Chen, Hanting
                   and Fang, Wenyi and Du, Fanyi and Han, Kai and Wang, Yunhe},
  title         = {Transferable Text Data Distillation by Trajectory Matching},
  year          = {2025},
  howpublished  = {arXiv preprint arXiv:2504.09818},
  eprint        = {2504.09818},
  archivePrefix = {arXiv},
  url           = {https://arxiv.org/abs/2504.09818}
}

@inproceedings{duan2025unnatural,
  author    = {Duan, Keyu and Zhao, Yiran and Feng, Zhili and Ni, Jinjie
               and Pang, Tianyu and Liu, Qian and Cai, Tianle and Dou, Longxu
               and Kawaguchi, Kenji and Goyal, Anirudh and Kolter, J. Zico
               and Shieh, Michael Qizhe},
  title     = {Unnatural Languages Are Not Bugs but Features for {LLMs}},
  booktitle = {Proceedings of the 42nd International Conference on Machine Learning},
  series    = {Proceedings of Machine Learning Research},
  volume    = {267},
  pages     = {14778--14792},
  year      = {2025},
  publisher = {PMLR},
  url       = {https://proceedings.mlr.press/v267/duan25c.html}
}

@inproceedings{shin2020autoprompt,
  author    = {Shin, Taylor and Razeghi, Yasaman and Logan IV, Robert L.
               and Wallace, Eric and Singh, Sameer},
  title     = {{AutoPrompt}: Eliciting Knowledge from Language Models with Automatically Generated Prompts},
  booktitle = {Proceedings of the 2020 Conference on Empirical Methods in Natural Language Processing (EMNLP)},
  pages     = {4222--4235},
  year      = {2020},
  publisher = {Association for Computational Linguistics},
  doi       = {10.18653/v1/2020.emnlp-main.346},
  url       = {https://aclanthology.org/2020.emnlp-main.346/}
}

@inproceedings{li2021prefix,
  author    = {Li, Xiang Lisa and Liang, Percy},
  title     = {{Prefix-Tuning}: Optimizing Continuous Prompts for Generation},
  booktitle = {Proceedings of the 59th Annual Meeting of the Association for Computational Linguistics and the 11th International Joint Conference on Natural Language Processing (Volume 1: Long Papers)},
  pages     = {4582--4597},
  year      = {2021},
  publisher = {Association for Computational Linguistics},
  doi       = {10.18653/v1/2021.acl-long.353},
  url       = {https://aclanthology.org/2021.acl-long.353/}
}

@inproceedings{lester2021prompt,
  author    = {Lester, Brian and Al-Rfou, Rami and Constant, Noah},
  title     = {The Power of Scale for Parameter-Efficient Prompt Tuning},
  booktitle = {Proceedings of the 2021 Conference on Empirical Methods in Natural Language Processing},
  pages     = {3045--3059},
  year      = {2021},
  publisher = {Association for Computational Linguistics},
  doi       = {10.18653/v1/2021.emnlp-main.243},
  url       = {https://aclanthology.org/2021.emnlp-main.243/}
}

@inproceedings{liu2022ptuning,
  author    = {Liu, Xiao and Ji, Kaixuan and Fu, Yicheng and Tam, Weng
               and Du, Zhengxiao and Yang, Zhilin and Tang, Jie},
  title     = {{P-Tuning}: Prompt Tuning Can Be Comparable to Fine-tuning Across Scales and Tasks},
  booktitle = {Proceedings of the 60th Annual Meeting of the Association for Computational Linguistics (Volume 2: Short Papers)},
  pages     = {61--68},
  year      = {2022},
  publisher = {Association for Computational Linguistics},
  doi       = {10.18653/v1/2022.acl-short.8},
  url       = {https://aclanthology.org/2022.acl-short.8/}
}

@inproceedings{yin2020deepinversion,
  author    = {Yin, Hongxu and Molchanov, Pavlo and Alvarez, Jose M.
               and Li, Zhizhong and Mallya, Arun and Hoiem, Derek
               and Jha, Niraj K. and Kautz, Jan},
  title     = {Dreaming to Distill: Data-Free Knowledge Transfer via {DeepInversion}},
  booktitle = {Proceedings of the IEEE/CVF Conference on Computer Vision and Pattern Recognition},
  year      = {2020},
  url       = {https://github.com/NVlabs/DeepInversion}
}

@inproceedings{wang2022cafe,
  author    = {Wang, Kai and Zhao, Bo and Peng, Xiangyu and Zhu, Zheng
               and Yang, Shuo and Wang, Shuo and Huang, Guan and Bilen, Hakan
               and Wang, Xinchao and You, Yang},
  title     = {{CAFE}: Learning to Condense Dataset by Aligning Features},
  booktitle = {Proceedings of the IEEE/CVF Conference on Computer Vision and Pattern Recognition},
  pages     = {12186--12195},
  year      = {2022},
  doi       = {10.1109/CVPR52688.2022.01188},
  url       = {https://arxiv.org/abs/2203.01531}
}

@misc{cheng2026scaling,
  author        = {Cheng, Daning and Liu, Zeyu and Sun, Jun and Xia, Fen
                   and Zhang, Boyang and Liu, Dongping and Zhang, Yunquan},
  title         = {A Qualitative Test-Risk Mechanism for Scaling Behavior in Normalized Residual Networks},
  year          = {2026},
  howpublished  = {arXiv preprint arXiv:2605.08297},
  eprint        = {2605.08297},
  archivePrefix = {arXiv},
  url           = {https://arxiv.org/abs/2605.08297}
}

@inproceedings{houlsby2019adapters,
  author    = {Houlsby, Neil and Giurgiu, Andrei and Jastrzebski, Stanislaw
               and Morrone, Bruna and De Laroussilhe, Quentin
               and Gesmundo, Andrea and Attariyan, Mona and Gelly, Sylvain},
  title     = {Parameter-Efficient Transfer Learning for {NLP}},
  booktitle = {Proceedings of the 36th International Conference on Machine Learning},
  series    = {Proceedings of Machine Learning Research},
  volume    = {97},
  pages     = {2790--2799},
  year      = {2019},
  publisher = {PMLR},
  url       = {https://proceedings.mlr.press/v97/houlsby19a.html}
}

@inproceedings{hu2022lora,
  author    = {Hu, Edward J. and Shen, Yelong and Wallis, Phillip
               and Allen-Zhu, Zeyuan and Li, Yuanzhi and Wang, Shean
               and Wang, Lu and Chen, Weizhu},
  title     = {{LoRA}: Low-Rank Adaptation of Large Language Models},
  booktitle = {International Conference on Learning Representations},
  year      = {2022},
  url       = {https://arxiv.org/abs/2106.09685}
}
\bibliographystyle{iclr2027_conference}

\clearpage
\appendix

\section{Basic Setup and Notation}
\label{app:setup_notation}

\paragraph{Basic setup.}
Let
\(\mathcal D_{\mathrm{ref}}
=\{(x_i,y_i)\}_{i=1}^{N_{\mathrm{ref}}}\)
be the finite reference dataset. We use
\(\mathbb E_{\mathcal D_{\mathrm{ref}}}[\cdot]\)
to denote averaging under its empirical distribution, i.e.,
\(\mathbb E_{\mathcal D_{\mathrm{ref}}}[g]
=N_{\mathrm{ref}}^{-1}\sum_{i=1}^{N_{\mathrm{ref}}}g(x_i,y_i)\).
Write \(z_i=f_{\mathrm{bot}}(x_i)\) and
\(\zeta_i=\nabla_{z_i}\ell(f_{\mathrm{top}}(z_i),y_i)\).

Algorithm~1 optimizes continuous synthetic inputs \(Z^s\).
Each \(Z^s\) induces a parameter-space tangent direction
\(v(Z^s)\), whose first-order effect on the reference
activations is \(u_{v(Z^s)}(z_i)\). The synthesis objective is
the first-order variation of the empirical reference loss:
\[
\min_{Z^s}D_{\mathcal D_{\mathrm{ref}}}(v(Z^s)),
\qquad
D_{\mathcal D_{\mathrm{ref}}}(v(Z^s))
=
\frac{1}{N_{\mathrm{ref}}}
\sum_{i=1}^{N_{\mathrm{ref}}}
\zeta_i^\top u_{v(Z^s)}(z_i).
\]
A negative value indicates a locally improving direction
for a sufficiently small positive step under the stated
differentiability and realizability conditions; it does not
guarantee the outcome of finite-step fine-tuning.

For any distribution \(Q\), let
\(\mu(Q)=\mathbb E_Q[\zeta]\) and
\(\Sigma_Q=\operatorname{Cov}_Q(\zeta)\).
These statistics are used to decompose and bound \(D_Q(v)\)
below; they are not optimization targets for \(Z^s\).

\paragraph{Assumptions.}
We analyze local adaptation around the fixed pretrained model $f_{\theta_0}$. We assume differentiability of the relevant mappings and finite second moments of the activations and activation gradients, ensuring that the expectations, covariance matrices, and first-order quantities below are well defined. Additional assumptions required by individual results are stated locally.

\vspace{2pt}
\begin{tcolorbox}[
enhanced,
breakable,
colback=blue!3,
colframe=blue!45!black,
frame hidden,
borderline west={1.5pt}{0pt}{blue!50!black},
title=\textbf{Notation},
colbacktitle=blue!8,
coltitle=black,
fonttitle=\bfseries,
sharp corners,
boxsep=2pt,
left=4pt,
right=4pt,
top=3pt,
bottom=3pt
]
\footnotesize
\renewcommand{\arraystretch}{1.08}
\setlength{\tabcolsep}{2.5pt}
\begin{tabularx}{\linewidth}{
@{}
>{\raggedright\arraybackslash}p{0.20\linewidth}
>{\raggedright\arraybackslash}p{0.27\linewidth}
>{\raggedright\arraybackslash}p{0.20\linewidth}
>{\raggedright\arraybackslash}X
@{}
}
\textbf{Symbol} & \textbf{Meaning}
& \textbf{Symbol} & \textbf{Meaning} \\[1pt]
\hline
\\[-5pt]
$f_{\theta_0}
 = f_{\mathrm{top}}\circ f_{\mathrm{bot}}$
&
Fixed model decomposed at $l^*$.
&
$l^*,\,N$
&
Target layer and hidden dimension.
\\
$z_i=f_{\mathrm{bot}}(x_i)$
&
Activation of reference example $i$ at $l^*$.
&
$\zeta_i$
&
Activation gradient of reference example $i$.
\\
$\mathcal D_{\mathrm{ref}}$
&
Finite reference dataset.
&
$N_{\mathrm{ref}}$
&
Number of reference examples.
\\
$\mathbb E_{\mathcal D_{\mathrm{ref}}}$
&
Average over the reference dataset.
&
$Q$
&
A distribution, including an empirical distribution.
\\
$Z^s$
&
Continuous synthetic inputs optimized by DASA.
&
$S_{\mathrm{syn}}$
&
Stored synthetic inputs used for fine-tuning.
\\
$v(Z^s)$
&
Parameter-space tangent direction induced by $Z^s$.
&
$v$
&
A generic parameter-space tangent direction.
\\
$u_v(z)$
&
First-order activation change induced by $v$ at $z$.
&
$D_Q(v)$
&
First-order risk variation under $Q$.
\\
$\mu(Q)$
&
Mean activation gradient under $Q$.
&
$\Sigma_Q$
&
Covariance of activation gradients under $Q$.
\\
$\bar z_Q$
&
Mean activation under $Q$.
&
$\sigma_z^2(Q)$
&
Activation dispersion under $Q$.
\\
$\bar u_{v,Q}$
&
Mean first-order activation change under $Q$.
&
$c_Q(v)$
&
Covariance residual in the decomposition of $D_Q(v)$.
\\
$h_\theta$
&
Residual adaptation branch.
&
$A,B,r,\alpha$
&
LoRA factors, rank, and scaling factor.
\\
$\|\cdot\|_\sigma$
&
Spectral norm.
&
&
\\
\end{tabularx}
\end{tcolorbox}
\paragraph{Synthetic-induced tangent direction.}
Let $h_\theta$ be an auxiliary residual branch at the target layer,
parameterized around a zero-output initialization $h_0=0$.
For the current synthetic inputs $Z^s$ and their retained prediction
targets $y^s$, let $z^s$ denote the resulting target-layer activations
and let
$\zeta^s=\nabla_{z^s}\ell(f_{\mathrm{top}}(z^s),y^s)$.
Write $\mathbb E_{\mathrm{syn}}$ for the empirical average using the
same prediction-target mask and normalization as the synthetic loss.
We construct the parameter-space tangent direction and its effect on
a reference activation by
\[
v(Z^s)
=
-\mathbb E_{\mathrm{syn}}\!\left[
\left(
\left.\frac{\partial h_\theta(z^s)}{\partial\theta}
\right|_{\theta=0}
\right)^\top\zeta^s
\right],
\qquad
u_{v(Z^s)}(z)
=
\left.\frac{\partial h_\theta(z)}{\partial\theta}
\right|_{\theta=0}v(Z^s).
\]
Thus $v(Z^s)$ is constructed from the synthetic activation gradients
in the tangent space of the auxiliary residual branch. The pretrained
model and the auxiliary branch remain fixed during synthesis; only
$Z^s$ is updated. LoRA fine-tuning is performed separately after the
synthetic data have been generated.

\section{Theoretical Background}
\label{app:prior_work}

\paragraph{Auxiliary residual branch.}
\citet{cheng2026scaling} study the expansion of a fixed reference
model $f_{\mathrm{old}}^*=f_{\mathrm{top}}\circ f_{\mathrm{bot}}$.
At a selected insertion point, let $z=f_{\mathrm{bot}}(x)$ and add
an auxiliary residual branch $h_\theta$:
\begin{equation}
    f_\theta(x)
    =
    f_{\mathrm{top}}\bigl(z+h_\theta(z)\bigr).
\end{equation}
The branch has a zero-output initialization, $h_0(z)=0$, so the
expanded model initially computes the same function as
$f_{\mathrm{old}}^*$. For a parameter-space direction $v$, let
$u_v(z)$ denote the first-order representation change induced by
the branch, and define the activation gradient
\begin{equation}
    \zeta
    =
    \nabla_z\ell\bigl(f_{\mathrm{top}}(z),y\bigr).
\end{equation}
The inner product $\zeta^\top u_v(z)$ is the first-order change
in the loss of an individual example. Here $v$ is a direction in
the branch's parameter space, whereas $u_v(z)$ is its effect in
representation space. The representation change must be
realizable by the branch.

\paragraph{Population risk and the population jumpboard.}
Let $\mathcal Q$ be a data distribution. The \emph{population risk}
of a model $f$ is its expected loss under $\mathcal Q$:
\begin{equation}
    R_{\mathcal Q}(f)
    =
    \mathbb E_{(x,y)\sim\mathcal Q}
    \bigl[\ell(f(x),y)\bigr].
\end{equation}
For the expanded model along the path $\eta v$, write
$f_{\eta v}(x)
=
f_{\mathrm{top}}\bigl(z+h_{\eta v}(z)\bigr)$.
Under the differentiability and expectation-interchange
conditions of \citet{cheng2026scaling}, the initial directional
derivative of its population risk is
\begin{equation}
\begin{aligned}
    D_{\mathcal Q}(v)
    &=
    \left.
    \frac{\mathrm d}{\mathrm d\eta}
    R_{\mathcal Q}(f_{\eta v})
    \right|_{\eta=0} \\
    &=
    \mathbb E_{\mathcal Q}
    \bigl[\zeta^\top u_v(z)\bigr].
\end{aligned}
\label{eq:prior_population_direction}
\end{equation}
Condition~1 of \citet{cheng2026scaling} requires an admissible
insertion point and a realizable direction $v_{\mathrm{pop}}$
such that $D_{\mathcal Q}(v_{\mathrm{pop}})<0$. By
differentiability, there is a sufficiently small positive
$\eta$ for which
\begin{equation}
    R_{\mathcal Q}(f_{\eta v_{\mathrm{pop}}})
    <
    R_{\mathcal Q}(f_{\mathrm{old}}^*).
\end{equation}
Their Theorem~4 therefore establishes the existence of an
expanded model $\widetilde f_{\mathrm{pop}}$ with lower
population risk. We refer to this comparison model as the
\emph{population jumpboard}. It witnesses an improving point in
the expanded model class; the existence result does not imply
that a particular optimization algorithm will find it.

\paragraph{Empirical risk and the empirical jumpboard.}
For a finite sample
$S=\{(x_i,y_i)\}_{i=1}^{m}$, its \emph{empirical risk} is
\begin{equation}
    L_S(f)
    =
    \frac{1}{m}
    \sum_{i=1}^{m}\ell(f(x_i),y_i).
\end{equation}
Thus $L_{S_{\mathrm{train}}}$ and $L_{S_{\mathrm{test}}}$
denote the losses averaged over the training and test samples,
respectively; neither is identical to the population risk
$R_{\mathcal Q}$. A direction selected using
$S_{\mathrm{train}}$, followed by a small positive step,
produces an \emph{empirical jumpboard} $\widetilde f_S$.
Unlike $\widetilde f_{\mathrm{pop}}$, this model depends on the
training sample. It is also distinct from the final trained
expanded model $f_{\mathrm{new}}$.

To connect an empirical jumpboard to test loss,
\citet{cheng2026scaling} consider a setting in which the
residual branch can realize a constant representation
perturbation. Let $\mu_M$ and $g_K$ be the mean activation
gradients of independent training and test samples of sizes
$M$ and $K$. If the training-selected direction realizes
$u_v(z)=-\mu_M$, then
\begin{equation}
    \left.
    \frac{\mathrm d}{\mathrm d\eta}
    L_{S_{\mathrm{train}}}(f_{\eta v})
    \right|_{\eta=0}
    =
    -\|\mu_M\|^2,
    \qquad
    \left.
    \frac{\mathrm d}{\mathrm d\eta}
    L_{S_{\mathrm{test}}}(f_{\eta v})
    \right|_{\eta=0}
    =
    -\mu_M^\top g_K.
\end{equation}
Consequently, $\mu_M^\top g_K>0$ implies a local decrease in
test loss for a sufficiently small positive step. This
implication requires the constant perturbation to be
realizable by the branch and does not certify an arbitrary
finite step.

\paragraph{Probability of gradient alignment.}
The preceding local test-loss argument depends on the sign
of $\mu_M^\top g_K$. Theorem~5 of
\citet{cheng2026scaling} bounds the probability of
nonpositive alignment. Using the notation of Appendix~A,
let
\begin{equation}
    \mu(\mathcal Q)=\mathbb E_{\mathcal Q}[\zeta],
    \qquad
    \Sigma_{\mathcal Q}
    =\operatorname{Cov}_{\mathcal Q}(\zeta).
\end{equation}
Suppose that the training and test samples are independent
draws from $\mathcal Q$,
$\mu(\mathcal Q)\neq 0$, and
$\lambda_{\max}(\Sigma_{\mathcal Q})
\le C_\Sigma\tau^2$. Then
\begin{equation}
\begin{aligned}
    \Pr(\mu_M^\top g_K\le 0)
    \le{}&
    \frac{4C_\Sigma\tau^2}
         {M\|\mu(\mathcal Q)\|^2}
    +
    \frac{4C_\Sigma\tau^2}
         {K\|\mu(\mathcal Q)\|^2} \\
    &+
    \frac{4C_\Sigma\tau^2
          \operatorname{tr}(\Sigma_{\mathcal Q})}
         {MK\|\mu(\mathcal Q)\|^4}.
\end{aligned}
\label{eq:prior_alignment_bound}
\end{equation}
The proof separates two possible failures: the training mean
may deviate substantially from the population mean, or the
independent test mean may fluctuate against the training
direction. The covariance bound controls the probabilities of
these events. Theorem~5 concerns the sign of gradient
alignment; it does not by itself lower-bound the magnitude of
a test-loss improvement.

\paragraph{From a jumpboard to the final expanded model.}
The empirical jumpboard is a comparison model for
$f_{\mathrm{new}}$. Define its realized test-loss reduction
relative to the reference model as
\begin{equation}
    \Delta_R^{\mathrm{test}}
    =
    L_{S_{\mathrm{test}}}(f_{\mathrm{old}}^*)
    -
    L_{S_{\mathrm{test}}}(\widetilde f_S).
\end{equation}
Under the model-selection and uniform-generalization
conditions of \citet{cheng2026scaling}, their Theorem~3
combines this reduction with an additional training-loss
improvement $\Delta_{\mathrm{ERM}}$ and generalization terms
$\epsilon_M$ and $\epsilon_K$:
\begin{equation}
\begin{aligned}
    L_{S_{\mathrm{test}}}(f_{\mathrm{new}})
    \le{}&
    L_{S_{\mathrm{test}}}(f_{\mathrm{old}}^*)
    -\Delta_R^{\mathrm{test}}
    -\Delta_{\mathrm{ERM}} \\
    &+2(\epsilon_M+\epsilon_K).
\end{aligned}
\end{equation}
The comparison first relates the final model's training loss
to the jumpboard's training loss, then applies the uniform
train--test loss bound to both models. A strict improvement
of the final model follows from this bound only when the
realized gains exceed the generalization terms. The
population-risk existence result, the local empirical
jumpboard argument, and the final-model test-loss comparison
therefore establish different parts of the prior framework.

\section{Covariance Control under LoRA}
\label{app:covariance_control}

We specialize the first-order analysis to LoRA. Consider the residual
adaptation branch
\[
h_{A,B}(z)
=
\frac{\alpha}{r}BAz,
\]
where $A$ and $B$ are the LoRA factors, $r$ is the rank, and
$\alpha$ is the scaling factor. We consider the standard initialization
$B=0$.

Let $v=(V_A,V_B)$ be a tangent direction. Along the parameter path
\[
A(\eta)=A+\eta V_A,
\qquad
B(\eta)=\eta V_B,
\]
the residual branch becomes
\[
\begin{aligned}
h_{A(\eta),B(\eta)}(z)
&=
\frac{\alpha}{r}
(\eta V_B)(A+\eta V_A)z \\
&=
\eta\frac{\alpha}{r}V_BAz
+
\eta^2\frac{\alpha}{r}V_BV_Az.
\end{aligned}
\]
Taking the derivative with respect to $\eta$ at $\eta=0$ gives
\[
u_v(z)
=
\left.
\frac{d}{d\eta}
h_{A(\eta),B(\eta)}(z)
\right|_{\eta=0}
=
\frac{\alpha}{r}V_BAz.
\]
Thus, the perturbation $V_A$ has no first-order contribution under
the initialization $B=0$.

We next decompose the first-order variation. By definition,
\[
D_Q(v)
=
\mathbb E_Q[\zeta^\top u_v].
\]
Write
\[
\zeta
=
\mu(Q)+(\zeta-\mu(Q))
\]
and
\[
u_v
=
\bar u_{v,Q}
+
(u_v-\bar u_{v,Q}).
\]
Substituting both decompositions gives
\[
\begin{aligned}
D_Q(v)
&=
\mathbb E_Q
\Big[
\big(\mu(Q)+\zeta-\mu(Q)\big)^\top
\big(\bar u_{v,Q}+u_v-\bar u_{v,Q}\big)
\Big] \\
&=
\mu(Q)^\top\bar u_{v,Q}
+
\mu(Q)^\top
\mathbb E_Q[u_v-\bar u_{v,Q}] \\
&\quad
+
\mathbb E_Q[\zeta-\mu(Q)]^\top
\bar u_{v,Q} \\
&\quad
+
\mathbb E_Q
\left[
(\zeta-\mu(Q))^\top
(u_v-\bar u_{v,Q})
\right].
\end{aligned}
\]
Since
\[
\mathbb E_Q[\zeta-\mu(Q)]=0
\]
and
\[
\mathbb E_Q[u_v-\bar u_{v,Q}]=0,
\]
the two cross terms vanish. Therefore,
\[
D_Q(v)
=
\mu(Q)^\top\bar u_{v,Q}
+
c_Q(v),
\]
where
\[
c_Q(v)
=
\mathbb E_Q
\left[
(\zeta-\mu(Q))^\top
(u_v-\bar u_{v,Q})
\right].
\]

\begin{theorem}[Covariance Control]
\label{thm:covariance_control}
Under the assumptions in Appendix~\ref{app:setup_notation},
\[
\boxed{
|c_Q(v)|
\le
\sqrt{\operatorname{tr}(\Sigma_Q)}
\frac{\alpha}{r}
\|V_BA\|_\sigma
\sigma_z(Q).
}
\]
\end{theorem}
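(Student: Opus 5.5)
We bound $c_Q(v)$ with Cauchy--Schwarz, applied once in $\mathbb R^N$ and once in $L^2(Q)$, and then use the linear form of $u_v$ derived above. Since $u_v(z)=\frac{\alpha}{r}V_BAz$ is linear in $z$, its mean is $\bar u_{v,Q}=\frac{\alpha}{r}V_BA\bar z_Q$, where $\bar z_Q=\mathbb E_Q[z]$. The centered perturbation is therefore
\[
u_v-\bar u_{v,Q}=\frac{\alpha}{r}V_BA\,(z-\bar z_Q).
\]
Substituting this into the definition of $c_Q(v)$ gives
\[
c_Q(v)=\frac{\alpha}{r}\,\mathbb E_Q\!\left[(\zeta-\mu(Q))^\top V_BA\,(z-\bar z_Q)\right].
\]

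First we bound the integrand pointwise. Cauchy--Schwarz in $\mathbb R^N$ together with the definition of the spectral norm gives
\[
\bigl|(\zeta-\mu(Q))^\top V_BA(z-\bar z_Q)\bigr|\le \|\zeta-\mu(Q)\|\,\|V_BA\|_\sigma\,\|z-\bar z_Q\|.
\]
Next we move the absolute value inside the expectation and apply Cauchy--Schwarz in $L^2(Q)$:
\[
|c_Q(v)|\le \frac{\alpha}{r}\|V_BA\|_\sigma\,
\bigl(\mathbb E_Q\|\zeta-\mu(Q)\|^2\bigr)^{1/2}
\bigl(\mathbb E_Q\|z-\bar z_Q\|^2\bigr)^{1/2}.
\]

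Finally we identify the two factors. The identity $\mathbb E_Q\|\zeta-\mu(Q)\|^2=\operatorname{tr}(\Sigma_Q)$ is the standard trace-of-covariance identity. For the second factor we take the activation dispersion to be $\sigma_z^2(Q)=\mathbb E_Q\|z-\bar z_Q\|^2$, i.e. the trace of the activation covariance. The notation table introduces $\sigma_z^2(Q)$ without a formula, so the proof must fix this definition explicitly. The finite second moments assumed in Appendix~\ref{app:setup_notation} make both factors finite and justify the interchange of expectation and the linear map.

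The main obstacle is not analytic; it is making sure the constants match the boxed bound. Two points need care. First, $A$ is evaluated at its initialization, and the bound uses $\|V_BA\|_\sigma$ rather than the product $\|V_B\|_\sigma\|A\|_\sigma$, which is correct because we apply the spectral norm to the product operator directly. Second, $\sigma_z(Q)$ must be the total (trace) dispersion, not a per-coordinate variance. If the paper intended the per-coordinate version, the bound would require an extra factor of $\sqrt N$. I would therefore state the trace definition at the start of the proof.
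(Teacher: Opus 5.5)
Your proof is correct and follows the paper's argument: the paper also applies Cauchy--Schwarz pointwise and then in $L^2(Q)$, uses $\mathbb E_Q\|\zeta-\mu(Q)\|_2^2=\operatorname{tr}(\Sigma_Q)$, and uses $u_v-\bar u_{v,Q}=\frac{\alpha}{r}V_BA(z-\bar z_Q)$. The only difference is that the paper applies the spectral-norm bound inside $\mathbb E_Q\|u_v-\bar u_{v,Q}\|_2^2$ rather than pointwise, and it implicitly takes $\sigma_z^2(Q)=\mathbb E_Q\|z-\bar z_Q\|_2^2$, so your explicit trace-dispersion definition matches what the paper uses.
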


\begin{proof}
By definition,
\[
c_Q(v)
=
\mathbb E_Q
\left[
(\zeta-\mu(Q))^\top
(u_v-\bar u_{v,Q})
\right].
\]
Applying Cauchy--Schwarz twice gives
\[
\begin{aligned}
|c_Q(v)|
&\le
\mathbb E_Q
\left[
\|\zeta-\mu(Q)\|_2
\|u_v-\bar u_{v,Q}\|_2
\right] \\
&\le
\sqrt{\mathbb E_Q\|\zeta-\mu(Q)\|_2^2}
\sqrt{\mathbb E_Q\|u_v-\bar u_{v,Q}\|_2^2}.
\end{aligned}
\]

For the first factor, since
\[
\Sigma_Q
=
\mathbb E_Q
\left[
(\zeta-\mu(Q))(\zeta-\mu(Q))^\top
\right],
\]
we have
\[
\mathbb E_Q\|\zeta-\mu(Q)\|_2^2
=
\operatorname{tr}(\Sigma_Q).
\]

For the second factor, using
\[
u_v(z)=\frac{\alpha}{r}V_BAz
\]
and
\[
\bar u_{v,Q}
=
\frac{\alpha}{r}V_BA\bar z_Q,
\]
we obtain
\[
u_v-\bar u_{v,Q}
=
\frac{\alpha}{r}V_BA(z-\bar z_Q).
\]
Hence,
\[
\begin{aligned}
\mathbb E_Q
\|u_v-\bar u_{v,Q}\|_2^2
&=
\frac{\alpha^2}{r^2}
\mathbb E_Q
\|V_BA(z-\bar z_Q)\|_2^2 \\
&\le
\frac{\alpha^2}{r^2}
\|V_BA\|_\sigma^2
\mathbb E_Q
\|z-\bar z_Q\|_2^2 \\
&=
\frac{\alpha^2}{r^2}
\|V_BA\|_\sigma^2
\sigma_z^2(Q).
\end{aligned}
\]

Substituting the two bounds above yields
\[
|c_Q(v)|
\le
\sqrt{\operatorname{tr}(\Sigma_Q)}
\frac{\alpha}{r}
\|V_BA\|_\sigma
\sigma_z(Q).
\]
\end{proof}

\paragraph{Implication.}
From the decomposition above,
\[
D_Q(v)
=
\mu(Q)^\top\bar u_{v,Q}+c_Q(v).
\]
Since $c_Q(v)\le |c_Q(v)|$, Theorem~\ref{thm:covariance_control}
implies
\[
D_Q(v)
\le
\mu(Q)^\top\bar u_{v,Q}
+
\sqrt{\operatorname{tr}(\Sigma_Q)}
\frac{\alpha}{r}
\|V_BA\|_\sigma
\sigma_z(Q).
\]
Therefore, if
\[
-\mu(Q)^\top\bar u_{v,Q}
>
\sqrt{\operatorname{tr}(\Sigma_Q)}
\frac{\alpha}{r}
\|V_BA\|_\sigma
\sigma_z(Q),
\]
then the right-hand side is strictly negative, and hence
\[
D_Q(v)<0.
\]

For a fixed reference distribution $Q$, the activation statistics
$\mu(Q)$, $\Sigma_Q$, and $\sigma_z(Q)$ are fixed. In contrast, the
direction $v=v(Z^s)$ is induced by the synthetic inputs and can therefore
be shaped during synthesis. Optimizing $Z^s$ can increase the favorable
mean component $-\mu(Q)^\top\bar u_{v,Q}$ while controlling the geometry
of the induced LoRA direction through $\|V_BA\|_\sigma$. The bound
therefore provides a mechanism by which synthetic-data optimization can
favor directions whose mean improvement dominates the covariance
residual, whereas a fixed real-text corpus does not directly optimize
this trade-off.

\section{Training Details}
\label{app:training_details}

\paragraph{Downstream LoRA adaptation.}
We freeze all pretrained backbone parameters and perform downstream
adaptation using LoRA with rank $r=32$, scaling factor $\alpha=32$,
and dropout $0.05$. LoRA modules are applied to the query, key,
value, output, gate, up, and down projection layers. We optimize
the LoRA parameters for 32 optimizer steps using AdamW with a
learning rate of $2\times10^{-4}$,
$(\beta_1,\beta_2)=(0.9,0.999)$, $\epsilon=10^{-8}$,
weight decay $0.01$, and gradient clipping at $1.0$. We use a
cosine learning-rate schedule with a $3\%$ warmup ratio, BF16
precision, a maximum sequence length of 2048, a per-device batch
size of 1, and gradient accumulation over 8 steps. Unless
otherwise specified, all experiments use the same downstream
adaptation configuration across models, tasks, and methods.

\paragraph{Synthetic-data optimization.}
DASA initializes continuous synthetic inputs from the input
embeddings of sampled reference sequences and retains their
next-token prediction targets. The pretrained model and prediction
targets remain fixed during synthesis; only the synthetic inputs
are optimized. We update these inputs for $T_{\mathrm{syn}}=60$
steps using SGD with learning rate $\eta_{\mathrm{syn}}=0.1$,
zero momentum, zero weight decay, and gradient clipping at $1.0$.
The resulting synthetic tensors are stored in BF16 and used
directly as input embeddings during downstream LoRA adaptation.
DASA and GRADMM use identical sequence counts, sequence lengths,
synthesis steps, and downstream adaptation configurations.

\end{document}